\documentclass[letterpaper, 10 pt, conference]{ieeeconf}

\IEEEoverridecommandlockouts
\usepackage{amsmath,amssymb,mathtools,bm}
\usepackage{booktabs}
\usepackage{multirow}
\usepackage{array}
\usepackage{graphicx}
\usepackage{xcolor}
\usepackage{url}
\usepackage[ruled]{algorithm2e}
\usepackage{flushend}

\title{
Safe Meta-Reinforcement Learning via Information Space Reachability
}

\author{Zeyang Li, Sunbochen Tang, Navid Azizan%
\thanks{The authors are with the Laboratory for Information and Decision Systems (LIDS), Massachusetts Institute of Technology, Cambridge, MA 02139, USA. Emails: {\tt\small \{zeyang, tangsun, azizan\}@mit.edu}}
}

\newtheorem{theorem}{Theorem}
\newtheorem{proposition}{Proposition}

\newtheorem{definition}{Definition}

\newcommand{\ind}{\mathbf{1}}

\newcommand{\algname}{ISDAC}

\begin{document}

\maketitle
\thispagestyle{empty}
\pagestyle{empty}

\begin{abstract}
Meta-reinforcement learning (meta-RL) enables agents to adapt to unseen tasks with limited experience. Despite its promise, the application of meta-RL in real-world tasks is hindered by safety requirements, which have been underexplored in prior work. In this paper, we propose a safe meta-RL framework that explicitly accounts for safety during adaptation. Our key insight is to reason about safety in the information space, which captures both the physical state and the agent’s belief over the underlying task. Within this space, we introduce a safety value function that measures the probability of the agent avoiding unsafe regions indefinitely. We show that this function satisfies a self-consistency condition and a Bellman equation, which make it learnable via meta-RL. Based on this formulation, we develop a safe meta-RL algorithm that learns the safety value function and leverages it for safety filtering and constrained policy optimization. Experiments on meta-RL benchmarks demonstrate the effectiveness of the proposed method.
\end{abstract}

\section{Introduction}

Meta-reinforcement learning (meta-RL) \cite{beck2025tutorial} is a powerful paradigm for learning to learn in sequential decision-making problems. It enables agents to adapt quickly to previously unseen tasks using only limited data \cite{clavera2018learning, yu2020meta}. Specifically, an agent is trained over a distribution of tasks and learns an effective adaptation strategy, namely a meta-policy, that can be transferred to new tasks drawn from a similar distribution. This allows the agent to adapt to a new task with only a small amount of experience, significantly improving efficiency compared with learning a new policy from scratch.

Existing meta-RL methods can be broadly categorized according to how they achieve rapid adaptation. Gradient-based approaches, exemplified by MAML \cite{finn2017model}, formulate learning to learn as a bilevel optimization problem, in which the meta-learner acquires an initialization that can be quickly adapted to a new task through a small number of gradient updates. More recently, latent-belief approaches such as PEARL \cite{rakelly2019efficient} and VariBAD \cite{zintgraf2021varibad} adopt a Bayes-adaptive perspective, inferring a latent representation of the underlying task from observed experience and conditioning the policy on this inferred representation. By explicitly modeling task uncertainty, these methods enable efficient adaptation without requiring online gradient updates, and have demonstrated strong empirical performance across a range of meta-RL benchmarks.

Despite the widely recognized effectiveness of meta-RL, its application to real-world decision-making problems remains limited, largely due to safety requirements. In many practical settings, success depends not only on reward maximization but also on maintaining safety; otherwise, catastrophic failures may occur. This motivates the study of \emph{safe} meta-RL, which aims to learn a safe meta-policy that can adapt to new tasks such that the resulting policy not only achieves task objectives, such as reward maximization, but also satisfies safety constraints.

Existing safe meta-RL methods have notable limitations. First, most existing works adopt the constrained Markov decision process (CMDP) framework \cite{altman1999constrained}, in which the constraint requires the expected trajectory cost to remain below a predefined threshold. This formulation is often ill-suited to real-world applications, since constraints are enforced only in expectation, allowing the agent to violate safety requirements at particular states. Khattar et al. \cite{khattar2023a} formulate safe meta-RL in a CMDP-within-online framework and establish task-averaged guarantees for reward maximization and constraint satisfaction. Cho and Sun \cite{cho2024constrained} employ successive convex-constrained policy updates across multiple tasks via differentiable convex programming. Xu and Zhu \cite{xu2025efficient} propose safe policy adaptation and safe meta-policy training, and establish an anytime safety guarantee for policy adaptation. Second, these approaches largely inherit the optimization-centric perspective of methods such as MAML \cite{finn2017model}, where both learning and adaptation take place in the policy parameter space. Although this viewpoint is conceptually clean and theoretically appealing, it has been shown to be inefficient for complex control tasks \cite{rakelly2019efficient, zintgraf2021varibad}, due to expensive inner-loop optimization, on-policy data requirements, and the inability to explicitly capture task uncertainty, which is only indirectly reflected in policy parameters.

In this paper, we propose a novel safe meta-RL framework that explicitly addresses these limitations. First, we consider the state-wise constraint formulation, which requires the agent to satisfy safety constraints at every state it visits \cite{zhao2023state}. Second, we adopt the Bayes-adaptive perspective in meta-RL \cite{rakelly2019efficient, zintgraf2021varibad} and reason about safety in the information space, rather than in the policy parameter space. The information space is defined by the physical state together with the posterior belief over tasks. We then formulate safety preservation as a reachability problem in this space. This viewpoint offers both conceptual and practical advantages. It makes explicit that safety under task uncertainty is inherently belief-dependent: the same physical state may admit different safety guarantees under different task posteriors. By learning safety-preserving behavior directly in information space, the agent can better handle task uncertainty, leading to policies that are both safe and performant on meta-RL tasks.
The main contributions of this paper are summarized as follows.
\begin{itemize}
    \item We formulate meta-RL for safety preservation as a Bayes-adaptive reachability problem in information space. By introducing safety value functions and establishing their self-consistency conditions and Bellman equations, we develop a meta-RL framework for state-wise safety.
    \item We propose a practical safe meta-RL algorithm for complex high-dimensional systems, in which value functions and policies are approximated with neural networks, and belief updates are approximated by a neural encoder that infers latent task representations from online interactions.
    \item We demonstrate the effectiveness of the proposed algorithm on widely used meta-RL benchmarks.
\end{itemize}

\section{Problem Statement}

Consider a family of Markov decision processes (MDPs),
\begin{equation}
\nonumber
\mathcal{M}_z=\left(\mathcal{X},\mathcal{U},P_z,r_z,h_z,\gamma\right),
\end{equation}
parameterized by a task descriptor \(z\in\mathcal{Z}\). Here, \(\mathcal{Z}\) denotes the task space, \(\mathcal{X}\) denotes the state space, and \(\mathcal{U}\) denotes the action space. For each task \(z\), \(P_z(\cdot| x,u)\) defines the transition kernel on \(\mathcal{X}\), \(r_z:\mathcal{X}\times\mathcal{U}\to\mathbb{R}\) defines the reward function, and \(h_z:\mathcal{X}\to\mathbb{R}\) defines the constraint function. \(\gamma\in(0,1)\) is the reward discount factor.

We consider a meta-reinforcement learning (meta-RL) setting, where a task \(z\in\mathcal{Z}\) is sampled from a prior distribution \(p(z)\) at the beginning of each episode and then remains fixed for the duration of that episode. For clarity, we assume throughout the theoretical development that \(\mathcal{Z}\), \(\mathcal{X}\), and \(\mathcal{U}\) are finite. The extension to continuous spaces is standard.

Given a sampled task \(z\), the environment evolves according to the transition kernel \(x_{t+1}\sim P_z(\cdot| x_t,u_t)\), with instantaneous reward \(r_t=r_z(x_t,u_t)\) and constraint value \(h_t=h_z(x_t)\). The goal of meta-RL is to leverage experience collected across training tasks drawn from \(p(z)\) in order to learn a controller that can rapidly adapt to a new task using online interaction. Concretely, at each decision time \(t\), the policy may depend on the interaction history observed so far. Let
\begin{equation}
\nonumber
c_t=(x_0,h_0,u_0,r_0,\cdots,x_{t-1},h_{t-1},u_{t-1},r_{t-1},x_t,h_t)
\end{equation}
denote the context available up to time \(t\). A history-dependent policy \(\pi\) then selects actions according to \(c_t\). The standard meta-RL objective is to maximize the task-averaged discounted cumulative reward
\begin{equation}
\nonumber
J(\pi)
=
\mathbb{E}_{z\sim p(z),\,\pi}
\left[
\sum_{t=0}^{\infty}\gamma^t r_z(x_t,u_t)
\right].
\label{eq:meta-rl-objective}
\end{equation}
In this paper, we additionally require safety, in the sense that the state constraints \(h_z(x_t)\ge 0\) are satisfied for all \(t\) with high probability under the task distribution and the policy.

At the theoretical level, meta-RL can be formulated exactly as a Bayes-adaptive control problem \cite{duff2002optimal, ghavamzadeh2015bayesian}. Define the posterior belief over tasks by \(b_t(z)=\mathbb{P}(z| c_t)\), which we refer to as the belief state. Here, \(b_t\in\Delta(\mathcal{Z})\) is a probability distribution over the task space \(\mathcal{Z}\). The information state is defined as the pair consisting of the physical state and the belief state:
\begin{equation}
\nonumber
s_t=(x_t,b_t)\in\mathcal{S}=\mathcal{X}\times\Delta(\mathcal{Z}).
\end{equation}
The information state \(s_t\) constitutes an exact sufficient statistic for decision-making under task uncertainty: there exists a deterministic Markov policy on \(\mathcal{S}\) that achieves optimality.

Let \(\mathcal{O}\) denote the post-action observation space, and define the post-action observation at time \(t\) by
\begin{equation}
\nonumber
o_{t+1}=(r_t,x_{t+1},h_{t+1})\in\mathcal{O}.
\end{equation}
The task-conditioned observation kernel is
\begin{equation}
\begin{aligned}
&O_z(r,x^+,h^+| x,u)
\\&=
P_z(x^+| x,u)\,
\ind\{r=r_z(x,u)\}\,
\ind\{h^+=h_z(x^+)\}.
\end{aligned}
\label{eq:task-observation-kernel}
\end{equation}
The exact posterior update is then given by Bayes' rule
\begin{equation}
\nonumber
b_{t+1}(z)=
\frac{
O_z(o_{t+1}| x_t,u_t)\,b_t(z)
}{
\sum_{\bar z\in\mathcal Z}
O_{\bar z}(o_{t+1}| x_t,u_t)\,b_t(\bar z)
}.
\end{equation}

Although the Bayes-adaptive formulation provides a conceptually clean characterization of the meta-RL problem, maintaining the exact posterior \(b_t\) is generally intractable in complex settings. As a result, an important class of modern meta-RL algorithms approximates the posterior update via amortized inference from trajectory context. Specifically, one introduces an encoder \(q_\phi(\xi| c_t)\) and uses a finite-dimensional parameterization as a proxy for the belief \(b_t\). In this paper, we adopt this perspective: the exact safety analysis is carried out in the information state \((x_t,b_t)\), whereas the practical method replaces \(b_t\) with a learned latent representation inferred online from context.

\section{Reachability Analysis on Information Space}

In this section, we develop a meta-RL framework to characterize and optimize safety directly over the information space \(\mathcal{S}=\mathcal{X}\times\Delta(\mathcal{Z})\). As discussed earlier, it is necessary to reason on the information state \(s=(x,b)\), rather than the physical state \(x\) alone, since the belief \(b\) encodes epistemic uncertainty about the underlying task and therefore fundamentally affects the safety guarantees that can be established.

Define the task-dependent safety indicator as \(g_z(x)=\ind\{h_z(x)\ge 0\}\) and the corresponding information-state safety indicator as $g(s)=\sum_{z\in\mathcal{Z}} b(z)\,g_z(x)$.
Since the current constraint value \(h_t\) is part of the context \(c_t\), every task in the support of \(b_t\) is consistent with the observed current safety label. Therefore \(g(s_t)\in\{0,1\}\) for every reachable information state. Accordingly, all statements below are on the reachable part of \(\mathcal{S}\).

We consider stochastic policies \(\pi:\mathcal{S}\to\Delta(\mathcal{U})\). For \(s=(x,b)\in\mathcal S\), define the predictive observation kernel by
\begin{equation}
\nonumber
O(o^+| s,u)
=
\sum_{z\in\mathcal Z} b(z)\,O_z(o^+| x,u),
\end{equation}
where \(O_z\) is the task-conditioned observation kernel in \eqref{eq:task-observation-kernel}. The Bayesian update map is
\begin{equation}
\nonumber
\Phi(b,x,u,o^+)(z)
=
\frac{
O_z(o^+| x,u)\,b(z)
}{
\sum_{\bar z\in\mathcal Z} O_{\bar z}(o^+| x,u)\,b(\bar z)
}.
\end{equation}
The corresponding next information state is
\begin{equation}
\nonumber
F(s,u,o^+)=(x^+,\Phi(b,x,u,o^+)),
\end{equation}
where \(o^+=(r,x^+,h^+)\).

Fix \(s=(x,b)\in\mathcal{S}\) and \(u\in\mathcal{U}\). Under a stochastic policy \(\pi\), let \(\mathbb{P}_{s,u}^{\pi}\) and \(\mathbb{E}_{s,u}^{\pi}\) denote the probability law and expectation of the controlled process defined as follows: a task descriptor \(z\sim b\) is drawn once and kept fixed; \(s_0=s\) and \(u_0=u\); conditional on \(z,x_t,u_t\), the post-action observation \(o_{t+1}\) is sampled according to \(O_z(\cdot| x_t,u_t)\); the next information state is \(s_{t+1}=F(s_t,u_t,o_{t+1})\); and for all \(t\ge 1\), the action \(u_t\) is sampled from \(\pi(\cdot| s_t)\).

Define the first-violation time by
\begin{equation}
\nonumber
\tau=\inf\{t\ge 0:\ h_z(x_t)<0\}
=
\inf\{t\ge 0:\ g(s_t)=0\},
\end{equation}
with the convention \(\tau=\infty\) if the trajectory remains safe forever.
The quantity of primary interest is the probability that the process never reaches the unsafe region. This leads naturally to the following safety value functions.

\begin{definition}[safety value functions]
\label{def:safety_value}
For a stochastic policy \(\pi\), define its safety action-value function by
\begin{equation}
Q_h^{\pi}(s,u)
=
\mathbb{P}_{s,u}^{\pi}(\tau=\infty)
=
\mathbb{E}_{s,u}^{\pi}
\left[
\prod_{t=0}^{\infty} g(s_t)
\right].
\label{eq:undiscounted-safety-q}
\end{equation}
The associated safety state-value function is
\begin{equation}
\nonumber
V_h^{\pi}(s)
=
\sum_{u\in\mathcal{U}} \pi(u| s)\,Q_h^{\pi}(s,u).
\end{equation}

The optimal safety values are defined by \(Q_h^*(s,u)=\sup_{\pi} Q_h^{\pi}(s,u)\) and \(V_h^*(s)=\sup_{\pi} V_h^{\pi}(s)\).
\end{definition}

Equation \eqref{eq:undiscounted-safety-q} is the probability, under the posterior belief \(b\), that safety is maintained for all time after taking action \(u\) at information state \(s\) and then following \(\pi\). By maximizing this quantity, we obtain policies that are optimal for rendering the system safe, together with the corresponding optimal values.

The safety value function admits a self-consistency condition: once the current safety label and the next observation are revealed, the remainder of the problem has exactly the same form as the original one. This idea is inspired by Hamilton--Jacobi reachability \cite{bansal2017hamilton} and its RL formulations \cite{fisac2019bridging, li2024safe}.

\begin{proposition}[safety self-consistency condition]
\label{prop:undiscounted-self-consistency}
For every stochastic policy \(\pi\) and every \((s,u)\in\mathcal{S}\times\mathcal{U}\),
\begin{equation}
Q_h^{\pi}(s,u)
=
g(s)\sum_{o^+\in\mathcal{O}}
O(o^+| s,u)\,
V_h^{\pi}(F(s,u,o^+)).
\label{eq:undiscounted-self-consistency}
\end{equation}
\end{proposition}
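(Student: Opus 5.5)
The plan is a first-step decomposition of the product $\prod_{t\ge 0} g(s_t)$, followed by conditioning on the first observation $o_1$ and applying the Markov property of the information-state process. Since $g(s_0)=g(s)$ is deterministic given $s_0=s$, we can write
\begin{equation}
\nonumber
Q_h^{\pi}(s,u)=g(s)\,\mathbb{E}_{s,u}^{\pi}\Bigl[\prod_{t=1}^{\infty} g(s_t)\Bigr].
\end{equation}
If $g(s)=0$ both sides of \eqref{eq:undiscounted-self-consistency} vanish, so only $g(s)=1$ matters. Moreover, $g(s)\in\{0,1\}$ on the reachable part of $\mathcal{S}$, so $g(s_0)$ factors out exactly. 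The infinite product is a monotone limit of finite products bounded in $[0,1]$, so taking expectations of it is legitimate by monotone (or dominated) convergence.

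Next we condition on $o_1$. Under $\mathbb{P}_{s,u}^{\pi}$ the task is $z\sim b$ and $o_1\sim O_z(\cdot\mid x,u)$. Hence the marginal law of $o_1$ is $\sum_z b(z)O_z(o^+\mid x,u)=O(o^+\mid s,u)$. By Bayes' rule, the conditional law of $z$ given $o_1=o^+$ is exactly $\Phi(b,x,u,o^+)$, which is the belief component of $s_1=F(s,u,o^+)$. This gives
\begin{equation}
\nonumber
\mathbb{E}_{s,u}^{\pi}\Bigl[\prod_{t\ge1} g(s_t)\Bigr]=\sum_{o^+}O(o^+\mid s,u)\,\mathbb{E}_{s,u}^{\pi}\Bigl[\prod_{t\ge1} g(s_t)\,\Big|\,o_1=o^+\Bigr],
\end{equation}
where the sum runs over $o^+$ with $O(o^+\mid s,u)>0$. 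On the remaining terms $F$ may be undefined, but their weight is zero, so they contribute nothing.

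The key step is identifying the conditional expectation with $V_h^{\pi}(F(s,u,o^+))$. Given $o_1=o^+$, the process from time $1$ onward has the following structure:
\begin{itemize}
\item the task $z$ is distributed as $b_1=\Phi(b,x,u,o^+)$;
\item the state is $x_1=x^+$;
\item $u_1\sim\pi(\cdot\mid s_1)$;
\item subsequent observations follow $O_z(\cdot\mid x_t,u_t)$, with $s_{t+1}=F(s_t,u_t,o_{t+1})$.
\end{itemize}
This is precisely the law $\mathbb{P}_{s_1,u_1}^{\pi}$ with $u_1$ randomized by $\pi(\cdot\mid s_1)$, after shifting time by one. Justifying this requires writing the joint law of $(z,o_1,u_1,o_2,\dots)$ as a product $b(z)\,O_z(o_1\mid x,u)\,\pi(u_1\mid s_1)\,O_z(o_2\mid x_1,u_1)\cdots$. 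Dividing by $O(o^+\mid s,u)$ then turns $b(z)O_z(o^+\mid x,u)$ into $\Phi(b,x,u,o^+)(z)$, and what remains is the defining law started at $(s_1,u_1)$. It suffices to check this on cylinder sets, i.e.\ finite horizons, and then extend by the uniqueness of measures given by Ionescu–Tulcea. This matching of conditional laws is where I expect the only real care is needed, in particular to ensure that the policy depends on history only through $s_t$ so that the time shift is exact. Everything else is bookkeeping.

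With the conditional law identified, we get
\begin{equation}
\nonumber
\mathbb{E}_{s,u}^{\pi}\Bigl[\prod_{t\ge1} g(s_t)\,\Big|\,o_1=o^+\Bigr]=\sum_{u'}\pi(u'\mid s_1)\,Q_h^{\pi}(s_1,u')=V_h^{\pi}(s_1),
\end{equation}
and substituting this back yields \eqref{eq:undiscounted-self-consistency}.
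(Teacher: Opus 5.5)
Your proposal is correct and follows the same route as the paper's proof. You factor out $g(s_0)=g(s)$, condition on the first post-action observation $o_1$, and identify the conditional continuation value with $V_h^{\pi}(F(s,u,o^+))$. The paper asserts that last identification in one line. You instead justify it: Bayes' rule gives that the conditional law of $z$ given $o_1=o^+$ is $\Phi(b,x,u,o^+)$, so the shifted process is exactly $\mathbb{P}^{\pi}_{s_1,u_1}$ with $u_1\sim\pi(\cdot\mid s_1)$. You also note that zero-probability observations, where $F$ may be undefined, contribute nothing.
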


\begin{proof}
By definition,
\begin{equation}
\nonumber
Q_h^{\pi}(s,u)
=
\mathbb{E}_{s,u}^{\pi}
\left[
\prod_{t=0}^{\infty} g(s_t)
\right]
=
g(s)\,
\mathbb{E}_{s,u}^{\pi}
\left[
\prod_{t=1}^{\infty} g(s_t)
\right].
\end{equation}
Conditioning on the next observation \(o_1=o^+\) gives
\begin{equation}
\nonumber
\begin{aligned}
&Q_h^{\pi}(s,u)
\\&=
g(s)\sum_{o^+\in\mathcal{O}}
O(o^+| s,u)\,
\mathbb{E}_{s,u}^{\pi}
\left[
\prod_{t=1}^{\infty} g(s_t)
\ \middle|\
o_1=o^+
\right].
\end{aligned}
\end{equation}
Once \(o_1=o^+\) is revealed, the next information state is \(s_1=F(s,u,o^+)\), and the future control law is again \(\pi\), so
\begin{equation}
\nonumber
\mathbb{E}_{s,u}^{\pi}
\left[
\prod_{t=1}^{\infty} g(s_t)
\ \middle|\
o_1=o^+
\right]
=
V_h^{\pi}(F(s,u,o^+)).
\end{equation}
Substituting this identity proves \eqref{eq:undiscounted-self-consistency}.
\end{proof}

The self-consistency condition immediately suggests a Bellman-type characterization for the optimal safety values. The next theorem shows that the optimal perpetual-safety probability can indeed be computed recursively on information space.

\begin{theorem}[safety Bellman equation]
\label{thm:undiscounted-bellman}
The optimal safety values satisfy the following recursive characterization:
\begin{equation}
\label{eq:undiscounted-Bellman}
\left\{\begin{array}{l}
Q_h^*(s,u)=g(s)\sum_{o^{+}\in \mathcal{O}} O\left(o^{+} | s, u\right) V_h^*\left(F(s,u,o^+)\right) \\
V_h^*(s)=\max _{u \in \mathcal{U}} Q_h^*(s, u).\\
\end{array}\right.
\end{equation}
\end{theorem}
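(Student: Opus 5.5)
The plan is to prove the two equations in the order: first the $Q$-equation as an inequality in each direction, then the $V$-equation. Throughout, the policies are stochastic Markov policies on $\mathcal{S}$. Because $\mathcal{Z},\mathcal{X},\mathcal{U}$ are finite, each information state has only finitely many successors $F(s,u,o^+)$. The reachable set from any $s$ is therefore countable, which is what makes the "paste policies together" constructions below legitimate.

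\textbf{Upper bound for $Q$.} Fix any $\pi$. By Proposition~\ref{prop:undiscounted-self-consistency},
\[
Q_h^\pi(s,u)=g(s)\sum_{o^+}O(o^+|s,u)V_h^\pi(F(s,u,o^+))\le g(s)\sum_{o^+}O(o^+|s,u)V_h^*(F(s,u,o^+)),
\]
since $V_h^\pi\le V_h^*$ pointwise and $g(s)\ge 0$. Taking the supremum over $\pi$ gives ``$\le$''.

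\textbf{Lower bound for $Q$.} Fix $\varepsilon>0$. For each of the finitely many successors $s'_{o^+}=F(s,u,o^+)$ with $O(o^+|s,u)>0$, choose a policy $\pi_{o^+}$ with $V_h^{\pi_{o^+}}(s'_{o^+})\ge V_h^*(s'_{o^+})-\varepsilon$.

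The obstacle is that a Markov policy on $\mathcal S$ must be a single map. Distinct $o^+$ give distinct successors, but their future trajectories could later revisit the same information states and demand conflicting actions. I will handle this in one of two ways.
\begin{itemize}
\item Use history-dependent policies in the supremum. Their law is still well defined and the self-consistency argument goes through verbatim, and the sufficiency of information states justifies this. The composite policy then plays $u$, observes $o_1$, and follows $\pi_{o_1}$ thereafter.
\item Alternatively, first prove that there is a single policy $\bar\pi$ that is $\varepsilon$-optimal from every state simultaneously. This is the main obstacle. My first attempt will be to build $\bar\pi$ as a greedy policy with respect to $Q_h^*$ and verify its value via the next step.
\end{itemize}
With the composite policy, conditioning on $o_1$ as in the proof of Proposition~\ref{prop:undiscounted-self-consistency} gives
\[
Q_h^*(s,u)\ge g(s)\sum_{o^+}O(o^+|s,u)\,V_h^*(F(s,u,o^+))-\varepsilon.
\]
Letting $\varepsilon\to 0$ gives ``$\ge$''.

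\textbf{The $V$-equation.} For ``$\le$'': $V_h^\pi(s)=\sum_u\pi(u|s)Q_h^\pi(s,u)\le \max_u Q_h^*(s,u)$, so $V_h^*(s)\le\max_u Q_h^*(s,u)$. For ``$\ge$'': take $u^\star$ attaining the maximum, which exists because $\mathcal U$ is finite. Let $\pi$ be the policy that plays $u^\star$ at time $0$ and then follows near-optimal continuations, as in the previous step. Its value is within $\varepsilon$ of $Q_h^*(s,u^\star)$, so $V_h^*(s)\ge \max_u Q_h^*(s,u)-\varepsilon$ for all $\varepsilon>0$.

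\textbf{Where the care goes.} The key delicate point is the policy-pasting step, namely ensuring the class of policies over which the supremum is taken is closed under ``first action, then observation-dependent continuation.'' I expect to address it by allowing history-dependent (equivalently, time-indexed) policies in the definition of the suprema, and by noting that Proposition~\ref{prop:undiscounted-self-consistency} holds for them with the same proof. The remaining steps are monotonicity arguments.
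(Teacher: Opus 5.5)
Your skeleton matches the paper's. The upper bound for $Q_h^*$ is the paper's argument verbatim. The paper handles the lower bound, and implicitly the ``$\ge$'' half of the $V$-equation (via its claim that the first decision at $s$ is a free choice of $\mu$ followed by optimal play), simply by invoking Bellman's principle of optimality. You are right that this is not automatic: the suprema in Definition~\ref{def:safety_value} range over stationary $\pi:\mathcal S\to\Delta(\mathcal U)$, and ``play $u$, then continue near-optimally from each successor'' is not such a policy when information states are revisited. However, your proposal does not actually close this step.
\begin{itemize}
\item Option (a) changes the objects being characterized. That enlarging the class to history-dependent policies leaves the suprema unchanged is exactly what is at stake; the paper asserts Markov sufficiency only for the reward objective, and without proof. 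Also, Proposition~\ref{prop:undiscounted-self-consistency} does not hold verbatim for history-dependent policies, since the continuation is the shifted policy rather than $\pi$.
\item Option (b) is left as ``the main obstacle.'' Verifying the greedy policy ``via the next step'' would be circular if that means the composite-policy conditioning, because that step presupposes a uniformly good continuation.
\end{itemize}

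Option (b) closes in a few lines using only the inequalities you have already proved. Let $\bar\pi(s)\in\arg\max_{u}Q_h^*(s,u)$, a deterministic choice since $\mathcal U$ is finite. Your two ``$\le$'' steps give, at every reachable $s$,
\[
V_h^*(s)\le Q_h^*(s,\bar\pi(s))\le g(s)\sum_{o^+\in\mathcal O}O(o^+| s,\bar\pi(s))\,V_h^*\bigl(F(s,\bar\pi(s),o^+)\bigr).
\]
Under $\bar\pi$, the conditional law of $o_{n+1}$ given the history is $O(\cdot| s_n,u_n)$. This is the same posterior-predictive fact used in Proposition~\ref{prop:undiscounted-self-consistency}. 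By induction,
\[
V_h^*(s)\le\mathbb{E}^{\bar\pi}_{s,\bar\pi(s)}\Bigl[\prod_{k=0}^{n-1}g(s_k)\,V_h^*(s_n)\Bigr]\le\mathbb{P}^{\bar\pi}_{s,\bar\pi(s)}(\tau>n),
\]
where the last step uses $V_h^*\le g$. Letting $n\to\infty$ gives $V_h^*\le V_h^{\bar\pi}$, so $\bar\pi$ is optimal from every state simultaneously.

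Applying Proposition~\ref{prop:undiscounted-self-consistency} to $\bar\pi$ then yields $Q_h^*(s,u)\ge Q_h^{\bar\pi}(s,u)=g(s)\sum_{o^+}O(o^+| s,u)\,V_h^*(F(s,u,o^+))$. Together with your upper bound, this gives the $Q$-equation and $Q_h^*=Q_h^{\bar\pi}$. Hence $V_h^*(s)=V_h^{\bar\pi}(s)=Q_h^{\bar\pi}(s,\bar\pi(s))=Q_h^*(s,\bar\pi(s))=\max_uQ_h^*(s,u)$, with no pasting and no enlargement of the policy class.

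The argument relies on the terminal term having the favorable sign, $V_h^*\le g$. This is special to invariance objectives; greedy policies can fail for reachability maximization. That is why this step needs an actual proof rather than an appeal to the principle of optimality, in your plan as well as in the paper's proof.
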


\begin{proof}
Fix \((s,u)\in\mathcal{S}\times\mathcal{U}\). By Proposition \ref{prop:undiscounted-self-consistency}, for every stochastic policy \(\pi\),
\begin{equation}
\nonumber
Q_h^{\pi}(s,u)
=
g(s)\sum_{o^+\in\mathcal{O}}
O(o^+| s,u)\,
V_h^{\pi}(F(s,u,o^+)).
\end{equation}
Since \(V_h^{\pi}(F(s,u,o^+))\le V_h^*(F(s,u,o^+))\) for every \(o^+\), we obtain
\begin{equation}
Q_h^*(s,u)
\le
g(s)\sum_{o^+\in\mathcal{O}}
O(o^+| s,u)\,
V_h^*(F(s,u,o^+)).
\label{eq:undiscounted-proof-upper}
\end{equation}

Conversely, conditioned on the first post-action observation \(o^+\), the remaining problem is again the same safety-maximization problem started from the successor information state \(F(s,u,o^+)\). By Bellman's principle of optimality for the information-state controlled Markov process, an optimal continuation policy must therefore attain value \(V_h^*(F(s,u,o^+))\) at each reachable successor state. Hence
\begin{equation}
Q_h^*(s,u)
\ge
g(s)\sum_{o^+\in\mathcal{O}}
O(o^+| s,u)\,
V_h^*(F(s,u,o^+)).
\label{eq:undiscounted-proof-lower}
\end{equation}
Combining \eqref{eq:undiscounted-proof-upper} and \eqref{eq:undiscounted-proof-lower} proves the first equality in \eqref{eq:undiscounted-Bellman}.

For the state-value function, the first decision at state \(s\) is the choice of an action distribution \(\mu\in\Delta(\mathcal U)\). Therefore
\begin{equation}
\nonumber
V_h^*(s)
=
\sup_{\mu\in\Delta(\mathcal U)}
\sum_{u\in\mathcal U}\mu(u)\,Q_h^*(s,u).
\end{equation}
Since the right-hand side is linear in \(\mu\), its supremum over the simplex \(\Delta(\mathcal U)\) is attained at an extreme point, i.e., at a deterministic action. Hence $V_h^*(s)=\max_{u\in\mathcal U}Q_h^*(s,u)$.
\end{proof}

The proposed safety value function \eqref{eq:undiscounted-safety-q} characterizes perpetual safety. However, the corresponding self-consistency condition \eqref{eq:undiscounted-self-consistency} and Bellman equation \eqref{eq:undiscounted-Bellman} generally do not induce contraction mappings. We therefore introduce a discounted surrogate that retains a clear probabilistic interpretation while yielding contraction mappings and stable fixed-point characterizations.

\begin{definition}[discounted safety value functions]
\label{def:discounted_safety_value}
Fix a safety discount factor \(\gamma_h\in(0,1)\). For a stochastic policy \(\pi\), define the discounted safety action-value function by
\begin{equation}
Q_{h,\gamma_h}^{\pi}(s,u)
=
(1-\gamma_h)\,
\mathbb{E}_{s,u}^{\pi}
\left[
\sum_{t=0}^{\infty}
\gamma_h^t
\prod_{k=0}^{t} g(s_k)
\right].
\label{eq:discounted-safety-q}
\end{equation}
The discounted safety state-value function is
\begin{equation}
\nonumber
V_{h,\gamma_h}^{\pi}(s)
=
\sum_{u\in\mathcal{U}} \pi(u| s)\,Q_{h,\gamma_h}^{\pi}(s,u).
\label{eq:discounted-safety-v}
\end{equation}

The optimal discounted safety values are defined by \(Q_{h,\gamma_h}^*(s,u)=\sup_{\pi} Q_{h,\gamma_h}^{\pi}(s,u)\) and \(V_{h,\gamma_h}^*(s)=\sup_{\pi} V_{h,\gamma_h}^{\pi}(s)\).
\end{definition}

The discounted quantity has an interesting probabilistic interpretation. Suppose that, independently of the task, the episode terminates after each step with probability \(1-\gamma_h\). If \(T\) denotes the resulting geometric horizon, then \(Q_{h,\gamma_h}^{\pi}(s,u)\) is exactly the probability that the trajectory remains safe throughout the episode, i.e., that no safety violation occurs before termination. The next proposition makes this interpretation precise.

\begin{proposition}[geometric-horizon interpretation]
\label{prop:geometric-horizon}
Let \(T\) be independent of the controlled process and geometrically distributed on \(\{0,1,2,\ldots\}\) with \(\mathbb{P}(T=t)=(1-\gamma_h)\gamma_h^t\).
Then, for every \((s,u)\),
\begin{equation}
\nonumber
Q_{h,\gamma_h}^{\pi}(s,u)=\mathbb{P}_{s,u}^{\pi}(\tau>T).
\end{equation}
\end{proposition}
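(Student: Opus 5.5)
The plan is to identify the product of safety indicators with the event that no violation has occurred yet, and then use the independence of \(T\) to swap the geometric weights for a probability.

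First, for each \(t\), on the reachable part of \(\mathcal S\) each \(g(s_k)\in\{0,1\}\). By the definition of \(\tau\) as the first time \(g(s_t)=0\), we have the pathwise identity
\begin{equation}
\nonumber
\prod_{k=0}^{t} g(s_k)=\ind\{\tau>t\}.
\end{equation}
Substituting this into \eqref{eq:discounted-safety-q} gives
\begin{equation}
\nonumber
Q_{h,\gamma_h}^{\pi}(s,u)=\mathbb{E}_{s,u}^{\pi}\left[\sum_{t=0}^{\infty}(1-\gamma_h)\gamma_h^t\,\ind\{\tau>t\}\right].
\end{equation}
The summands are nonnegative, so Tonelli's theorem (or monotone convergence) lets us exchange expectation and sum. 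This yields \(\sum_{t}(1-\gamma_h)\gamma_h^t\,\mathbb{P}_{s,u}^{\pi}(\tau>t)\).

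Second, we enlarge the probability space to carry \(T\), independent of \((z,(s_t),(u_t))\), with \(\mathbb{P}(T=t)=(1-\gamma_h)\gamma_h^t\). Conditioning on \(T\) and using independence,
\begin{equation}
\nonumber
\mathbb{P}_{s,u}^{\pi}(\tau>T)=\sum_{t=0}^{\infty}\mathbb{P}(T=t)\,\mathbb{P}_{s,u}^{\pi}(\tau>t\mid T=t)=\sum_{t=0}^{\infty}(1-\gamma_h)\gamma_h^t\,\mathbb{P}_{s,u}^{\pi}(\tau>t).
\end{equation}
This matches the expression obtained in the first step and proves the claim. The event \(\tau=\infty\) is covered automatically, since \(\ind\{\tau>t\}=1\) for all \(t\) on that event.

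The only real subtlety is the binary-valuedness of \(g\). Without it, \(\prod_{k\le t} g(s_k)\) would not be an indicator. This is exactly why the paper restricts attention to reachable information states, where the observed \(h_t\) forces every task in the support of \(b_t\) to agree on the current safety label.

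To make this rigorous, I would note that along any trajectory of the process defined for \(\mathbb{P}_{s,u}^{\pi}\), each \(s_k\) with \(k\ge1\) is reachable. This holds because the update \(\Phi\) conditions on \(h^+\), so every task in the support of the new belief has \(\ind\{h_z(x^+)\ge0\}\) equal to \(\ind\{h^+\ge0\}\). Together with the standing assumption that the initial \(s\) is reachable, this gives \(g(s_k)\in\{0,1\}\) for all \(k\). It also confirms that the two expressions for \(\tau\) in its definition coincide almost surely.
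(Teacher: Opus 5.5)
Your proposal is correct and follows the paper's proof essentially step for step. You use the pathwise identity $\prod_{k=0}^{t} g(s_k)=\ind\{\tau>t\}$, exchange sum and expectation, and use the independence of $T$ to identify $\sum_t \mathbb{P}(T=t)\,\mathbb{P}_{s,u}^{\pi}(\tau>t)$ with $\mathbb{P}_{s,u}^{\pi}(\tau>T)$, and your Tonelli step and reachability argument that $g(s_k)\in\{0,1\}$ (so the two expressions for $\tau$ agree) make explicit what the paper leaves implicit.
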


\begin{proof}
For each \(t\ge 0\), we have
\begin{equation}
\nonumber
\prod_{k=0}^{t} g(s_k)=\ind\{\tau>t\}.
\end{equation}
Substituting this identity into \eqref{eq:discounted-safety-q} yields
\begin{align}
\nonumber
Q_{h,\gamma_h}^{\pi}(s,u)
&=
(1-\gamma_h)\sum_{t=0}^{\infty}\gamma_h^t
\mathbb{P}_{s,u}^{\pi}(\tau>t)
\\
&=
\nonumber
\sum_{t=0}^{\infty}
\mathbb{P}(T=t)\,
\mathbb{P}_{s,u}^{\pi}(\tau>t)
\\
&=
\nonumber
\mathbb{P}_{s,u}^{\pi}(\tau>T),
\end{align}
where the last equality uses the independence of \(T\) and the controlled process.
\end{proof}

The discounted analogue of the self-consistency condition has the same basic structure as in the undiscounted case, except that the current safety indicator contributes an immediate term weighted by \(1-\gamma_h\).

\begin{proposition}[discounted self-consistency condition]
For every stochastic policy \(\pi\) and every \((s,u)\in\mathcal{S}\times\mathcal{U}\),
\begin{equation}
\begin{aligned}
&Q_{h,\gamma_h}^{\pi}(s,u)
=
(1-\gamma_h)\,g(s)\\ &
+
\gamma_h\,g(s)\sum_{o^+\in\mathcal{O}}
O(o^+| s,u)\,
V_{h,\gamma_h}^{\pi}(F(s,u,o^+)).
\end{aligned}
\label{eq:discounted-self-consistency}
\end{equation}
\end{proposition}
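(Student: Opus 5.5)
The plan mirrors the proof of Proposition~\ref{prop:undiscounted-self-consistency}: peel off the $t=0$ term of the series in \eqref{eq:discounted-safety-q}, pull $g(s)$ out of everything that remains, and identify what is left, after conditioning on the first observation, as the discounted safety value at the successor information state.

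First, split the series in \eqref{eq:discounted-safety-q} into its $t=0$ term and the tail. Since $s_0=s$ is deterministic, the $t=0$ term is $(1-\gamma_h)g(s)$. For $t\ge1$, each product $\prod_{k=0}^t g(s_k)$ contains the factor $g(s_0)=g(s)$, which we factor out. Reindexing the tail with $t=t'+1$ gives
\begin{equation}
\nonumber
Q_{h,\gamma_h}^{\pi}(s,u)=(1-\gamma_h)g(s)+\gamma_h\,g(s)\,(1-\gamma_h)\,\mathbb{E}_{s,u}^{\pi}\left[\sum_{t'=0}^{\infty}\gamma_h^{t'}\prod_{k=1}^{t'+1}g(s_k)\right].
\end{equation}
Exchanging expectation and sum is legitimate by Tonelli's theorem, because all terms are nonnegative and bounded by $\gamma_h^{t'}$.

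Next, condition on $o_1=o^+$, whose law under $\mathbb{P}_{s,u}^{\pi}$ is the predictive kernel $O(\cdot\mid s,u)=\sum_z b(z)O_z(\cdot\mid x,u)$. Then $s_1=F(s,u,o^+)$, and the remaining claim is that
\begin{equation}
\nonumber
(1-\gamma_h)\,\mathbb{E}_{s,u}^{\pi}\left[\sum_{t'}\gamma_h^{t'}\prod_{k=1}^{t'+1}g(s_k)\,\middle|\,o_1=o^+\right]=V_{h,\gamma_h}^{\pi}(F(s,u,o^+)).
\end{equation}
This is where the real content lies, and it is the step I expect to be the main obstacle. The process starts with $z\sim b$, but the continuation process started at $s_1$ draws a fresh $z\sim b_1$. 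We must check that the conditional law of $z$ given $o_1=o^+$ is exactly $b_1=\Phi(b,x,u,o^+)$; this is precisely Bayes' rule as encoded in $\Phi$. Given that fact, the conditional law of $(s_1,u_1,o_2,s_2,\dots)$ coincides with the law of the process started at $s_1$. In that process $u_1\sim\pi(\cdot\mid s_1)$, later actions follow $\pi$, and observations are drawn from $O_z$ with $z\sim b_1$. This uses the Markov property of the information-state process and the fact that $\pi$ depends only on the current information state.

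Finally, averaging the continuation over $u_1\sim\pi(\cdot\mid s_1)$ turns $Q_{h,\gamma_h}^{\pi}(s_1,\cdot)$ into $V_{h,\gamma_h}^{\pi}(s_1)$. Substituting back yields \eqref{eq:discounted-self-consistency}. Only observations with $O(o^+\mid s,u)>0$ contribute, so $F$ is well defined wherever it is used.
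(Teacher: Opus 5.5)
Your proposal is correct and matches the paper's proof step for step: you split off the $t=0$ term, factor out $g(s)$, reindex the tail, and identify the conditional tail given $o_1=o^+$ with $V_{h,\gamma_h}^{\pi}(F(s,u,o^+))$. Your extra justifications fill in details the paper leaves implicit: Tonelli for the interchange of sum and expectation, and the Bayes-rule check that the posterior of $z$ given $o_1=o^+$ is $\Phi(b,x,u,o^+)$, so the conditional continuation has the law of the process restarted at $s_1$.
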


\begin{proof}
Starting from \eqref{eq:discounted-safety-q},
\begin{equation}
\nonumber
\begin{aligned}
&Q_{h,\gamma_h}^{\pi}(s,u)
=
(1-\gamma_h)\,
\mathbb{E}_{s,u}^{\pi}
\left[
g(s_0)+
\sum_{t=1}^{\infty}
\gamma_h^t
\prod_{k=0}^{t} g(s_k)
\right]
\\
&=
(1-\gamma_h)\,g(s)
+
\gamma_h g(s)\,
(1-\gamma_h)\,
\mathbb{E}_{s,u}^{\pi}
\left[
\sum_{\ell=0}^{\infty}
\gamma_h^{\ell}
\prod_{k=1}^{\ell+1} g(s_k)
\right].
\end{aligned}
\end{equation}
Conditioning on the next observation \(o_1=o^+\) gives
\begin{equation}
\nonumber
\begin{aligned}
&(1-\gamma_h)\,
\mathbb{E}_{s,u}^{\pi}
\left[
\sum_{\ell=0}^{\infty}
\gamma_h^{\ell}
\prod_{k=1}^{\ell+1} g(s_k)
\ \middle|\
o_1=o^+
\right] \\&
=
V_{h,\gamma_h}^{\pi}(F(s,u,o^+)).
\end{aligned}
\end{equation}
Averaging with respect to \(O(o^+| s,u)\) proves \eqref{eq:discounted-self-consistency}.
\end{proof}

We now turn to optimality. As in the undiscounted case, the optimal discounted safety values satisfy a Bellman equation.

\begin{theorem}[discounted safety Bellman equation]
The optimal discounted safety values satisfy the following recursive characterization:
\begin{equation}
\nonumber
\left\{\begin{array}{l}
\begin{aligned}
&Q_{h, \gamma_h}^*(s, u)=(1-\gamma_h)\,g(s) \\&+ \gamma_h\,g(s) \sum_{o^{+} \in \mathcal{O}} O\left(o^{+} | s, u\right) V_{h, \gamma_h}^*\left(F(s,u,o^+)\right) \\
\end{aligned}\\
V_{h, \gamma_h}^*(s)=\max _{u \in \mathcal{U}} Q_{h, \gamma_h}^*(s, u).\\
\end{array}\right.
\end{equation}
\end{theorem}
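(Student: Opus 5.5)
The proof will follow the same two-inequality structure used for Theorem~\ref{thm:undiscounted-bellman}, but with a fixed-point/contraction argument in place of the appeal to the principle of optimality. Fix \((s,u)\). For the upper bound, I apply the discounted self-consistency condition \eqref{eq:discounted-self-consistency} to an arbitrary stochastic policy \(\pi\). Since \(g(s)\ge 0\), \(\gamma_h>0\), and \(V_{h,\gamma_h}^{\pi}(F(s,u,o^+))\le V_{h,\gamma_h}^*(F(s,u,o^+))\) for every \(o^+\), we get
\begin{equation}
\nonumber
Q_{h,\gamma_h}^{\pi}(s,u)\le (1-\gamma_h)g(s)+\gamma_h g(s)\sum_{o^+}O(o^+| s,u)\,V_{h,\gamma_h}^*(F(s,u,o^+)).
\end{equation}
Taking the supremum over \(\pi\) gives ``\(\le\)'' in the \(Q\)-equation.

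For the reverse inequality, I will not rely on a verbal principle of optimality. Instead, I will exploit the discount to build \(\epsilon\)-optimal continuations. Fix \(\epsilon>0\). The set of successor observations \(o^+\) with \(O(o^+| s,u)>0\) is finite, since all spaces are finite. For each such \(o^+\), choose a policy \(\pi_{o^+}\) with
\begin{equation}
\nonumber
V_{h,\gamma_h}^{\pi_{o^+}}(F(s,u,o^+))\ge V_{h,\gamma_h}^*(F(s,u,o^+))-\epsilon.
\end{equation}
Then define a policy \(\tilde\pi\) that takes \(u\) at time \(0\), observes \(o_1\), and thereafter follows \(\pi_{o_1}\). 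There is one subtlety: the paper's policies are Markov on \(\mathcal S\), while \(\tilde\pi\) switches on \(o_1\). I will handle this in one of two ways. The first option is to allow history-dependent policies in the supremum, which does not change the optimal value. The second, cleaner, option is the one I prefer. Define the operator
\begin{equation}
\nonumber
(\mathcal T V)(s)=\max_u\Big[(1-\gamma_h)g(s)+\gamma_h g(s)\sum_{o^+}O(o^+| s,u)V(F(s,u,o^+))\Big]
\end{equation}
on bounded functions on the (countable) reachable set. Check that \(\mathcal T\) is a \(\gamma_h\)-contraction in sup norm, because \(g\in\{0,1\}\) and \(O(\cdot| s,u)\) is a probability. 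Let \(\bar V\) be its unique fixed point, and take a deterministic Markov policy \(\bar\pi\) that is greedy with respect to \(\bar V\). By the self-consistency equation, \(V_{h,\gamma_h}^{\bar\pi}\) is the fixed point of the policy-evaluation operator \(\mathcal T^{\bar\pi}\), which is also a \(\gamma_h\)-contraction. Since \(\mathcal T^{\bar\pi}\bar V=\mathcal T\bar V=\bar V\), uniqueness gives \(V_{h,\gamma_h}^{\bar\pi}=\bar V\), so \(V^*\ge\bar V\). The first paragraph's argument gives \(V^{\pi}\le \mathcal T V^*\)-type bounds, i.e. \(V^*\le\mathcal T V^*\). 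Iterating with monotonicity of \(\mathcal T\) yields \(V^*\le\bar V\). Hence \(V^*=\bar V=\mathcal T V^*\), and \(\bar\pi\) is optimal.

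With \(V^*_{h,\gamma_h}=\bar V\) attained by \(\bar\pi\), the lower bound on \(Q^*\) follows from self-consistency for the policy that plays \(u\) first and then \(\bar\pi\). The \(V\)-equation then follows exactly as in Theorem~\ref{thm:undiscounted-bellman}. We have \(V^*(s)=\sup_{\mu\in\Delta(\mathcal U)}\sum_u\mu(u)Q^*(s,u)\), which is linear in \(\mu\) and therefore maximized at a vertex. I expect the main obstacle to be this lower-bound/attainment step: making rigorous that a single Markov policy on \(\mathcal S\) simultaneously achieves the optimum at all successor information states. This is where I expect the contraction argument on the reachable information space to carry the weight.
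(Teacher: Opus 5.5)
Your proof is correct, but for the key lower-bound step it takes a different route from the paper. The paper omits this proof and says only that it mirrors Theorem~\ref{thm:undiscounted-bellman}. That proof has three parts: the upper bound via self-consistency and \(V^{\pi}\le V^*\) (exactly your first paragraph), the lower bound by a direct appeal to Bellman's principle of optimality (an optimal continuation ``must attain'' \(V^*\) at each successor), and the \(V\)-equation by linearity over the simplex.

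You instead use the discount to make the lower bound rigorous. You identify \(V^*_{h,\gamma_h}\) as the unique fixed point \(\bar V\) of the state-value Bellman operator by sandwiching it. On one side, \(V^{\bar\pi}=\bar V\) by uniqueness of the fixed point of the greedy policy's evaluation operator. On the other, the monotone iterates give \(V^*\le\mathcal T V^*\le\mathcal T^2 V^*\le\cdots\to\bar V\). This supplies precisely what the paper leaves implicit: a single deterministic Markov policy on \(\mathcal S\) that is optimal at all successor information states simultaneously. So \(Q^*(s,u)\ge Q^{\bar\pi}(s,u)\) closes the gap without leaving the Markov policy class. As by-products, you get existence of an optimal stationary policy and uniqueness of \(V^*\) as a fixed point.

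The price is that your argument needs \(\gamma_h<1\). Without discounting the operator is not a contraction and has spurious fixed points (at a safe absorbing state every constant is one), so this route is not available for Theorem~\ref{thm:undiscounted-bellman}.

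One simplification: once you have \(V^*=\mathcal T V^*\) and the \(Q\)-equation, the \(V\)-equation is immediate, since \(V^*(s)=(\mathcal T V^*)(s)=\max_u Q^*(s,u)\). You do not need the simplex argument. That argument is in fact the more delicate step, because for a Markov policy, changing \(\pi(\cdot| s)\) also changes \(Q^{\pi}(s,\cdot)\) whenever \(s\) can recur.
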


\begin{proof}
The proof follows a similar idea to that of the undiscounted safety Bellman equation, so we omit it here.
\end{proof}

The discounted formulation is especially convenient since it admits fixed-point characterizations. These operators also motivate the critic and actor updates used in the practical algorithm later.

\begin{definition}[safety operators]
\label{def:safety_operators}
Let \(Q:\mathcal{S}\times\mathcal{U}\to\mathbb{R}\) be a bounded function. Define the safety self-consistency operator as
\begin{equation}
\nonumber
\begin{aligned}
&(\mathcal{T}_{h,\gamma_h}^{\pi}Q)(s,u)
=
(1-\gamma_h)\,g(s)
+
\gamma_h\,g(s)
\sum_{o^+\in\mathcal{O}}
O(o^+| s,u) \\ &
\sum_{u^+\in\mathcal{U}}
\pi(u^+| F(s,u,o^+))\,
Q(F(s,u,o^+),u^+),
\end{aligned}
\end{equation}
and the safety Bellman operator as
\begin{equation}
\begin{aligned}
&(\mathcal{T}_{h,\gamma_h}Q)(s,u)
=
(1-\gamma_h)\,g(s)
+
\gamma_h\,g(s) \\&
\sum_{o^+\in\mathcal{O}}
O(o^+| s,u)
\max_{u^+\in\mathcal{U}}
Q(F(s,u,o^+),u^+).
\end{aligned}
\label{eq:discounted-optimality-operator}
\end{equation}
\end{definition}

Based on these definitions, the discounted self-consistency condition and Bellman equation can be written compactly as \(Q_{h,\gamma_h}^{\pi}=\mathcal{T}_{h,\gamma_h}^{\pi}Q_{h,\gamma_h}^{\pi}\) and \(Q_{h,\gamma_h}^*=\mathcal{T}_{h,\gamma_h}Q_{h,\gamma_h}^*\). The next result shows that these operators are monotone contractions, which guarantees uniqueness of the corresponding fixed points.

\begin{theorem}[monotone contractions]
Let \(Q,\widetilde Q:\mathcal{S}\times\mathcal{U}\to\mathbb{R}\) be bounded functions. Then:
\begin{enumerate}
\item
If \(Q \le \widetilde Q\), then \(\mathcal{T}_{h,\gamma_h}^{\pi}Q\le\mathcal{T}_{h,\gamma_h}^{\pi}\widetilde Q\) and \(\mathcal{T}_{h,\gamma_h}Q\le\mathcal{T}_{h,\gamma_h}\widetilde Q\).
\item
The operators \(\mathcal{T}_{h,\gamma_h}^{\pi}\) and \(\mathcal{T}_{h,\gamma_h}\) are \(\gamma_h\)-contractions in the infinity norm:
\begin{equation}
\left\|
\mathcal{T}_{h,\gamma_h}^{\pi}Q
-
\mathcal{T}_{h,\gamma_h}^{\pi}\widetilde Q
\right\|_{\infty}
\le
\gamma_h
\left\|
Q-\widetilde Q
\right\|_{\infty},
\label{eq:contraction-eval}
\end{equation}
and
\begin{equation}
\left\|
\mathcal{T}_{h,\gamma_h}Q
-
\mathcal{T}_{h,\gamma_h}\widetilde Q
\right\|_{\infty}
\le
\gamma_h
\left\|
Q-\widetilde Q
\right\|_{\infty}.
\label{eq:contraction-opt}
\end{equation}
\end{enumerate}
Consequently, each operator has a unique fixed point. In particular, the unique fixed points are \(Q_{h,\gamma_h}^{\pi}\) for \(\mathcal{T}_{h,\gamma_h}^{\pi}\) and \(Q_{h,\gamma_h}^{*}\) for \(\mathcal{T}_{h,\gamma_h}\).
\end{theorem}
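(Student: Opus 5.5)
The plan is to verify the two properties directly from Definition~\ref{def:safety_operators} and then invoke Banach's fixed-point theorem together with the earlier self-consistency and Bellman identities.

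\textbf{Monotonicity.} Both operators have the form
\[
(1-\gamma_h)g(s)+\gamma_h g(s)\sum_{o^+}O(o^+| s,u)\,\Psi_Q(F(s,u,o^+)),
\]
where \(\Psi_Q(s')=\sum_{u^+}\pi(u^+| s')Q(s',u^+)\) for \(\mathcal{T}^{\pi}_{h,\gamma_h}\), and \(\Psi_Q(s')=\max_{u^+}Q(s',u^+)\) for \(\mathcal{T}_{h,\gamma_h}\). The coefficients \(g(s)\in\{0,1\}\), \(O(\cdot| s,u)\ge 0\) and \(\pi\ge0\) are all nonnegative. Moreover, both averaging and maximization are monotone. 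So \(Q\le\widetilde Q\) pointwise gives \(\Psi_Q\le\Psi_{\widetilde Q}\), and hence item~1 follows.

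\textbf{Contraction.} Subtracting the images of \(Q\) and \(\widetilde Q\) cancels the \((1-\gamma_h)g(s)\) term, which leaves
\[
\gamma_h g(s)\sum_{o^+}O(o^+| s,u)\big(\Psi_Q-\Psi_{\widetilde Q}\big)(F(s,u,o^+)).
\]
For the evaluation operator, \(|\Psi_Q(s')-\Psi_{\widetilde Q}(s')|\le\sum_{u^+}\pi(u^+| s')|Q-\widetilde Q|(s',u^+)\le\|Q-\widetilde Q\|_\infty\). For the optimality operator, I will use the elementary inequality \(|\max_a f(a)-\max_a \tilde f(a)|\le\max_a|f(a)-\tilde f(a)|\). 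Because \(O(\cdot| s,u)\) is a probability distribution and \(0\le g(s)\le1\), the whole expression is bounded in absolute value by \(\gamma_h\|Q-\widetilde Q\|_\infty\). Taking the supremum over \((s,u)\) gives \eqref{eq:contraction-eval} and \eqref{eq:contraction-opt}.

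\textbf{Fixed points.} The space of bounded functions on \(\mathcal S\times\mathcal U\) with the sup norm is complete. Both operators map bounded functions to bounded functions, since \(|\mathcal{T}Q|\le 1+\gamma_h\|Q\|_\infty\). Banach's theorem therefore gives a unique fixed point for each operator.

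It remains to identify these fixed points. The function \(Q^{\pi}_{h,\gamma_h}\) takes values in \([0,1]\), so it is bounded. By the discounted self-consistency proposition it satisfies \(Q^\pi_{h,\gamma_h}=\mathcal{T}^\pi_{h,\gamma_h}Q^\pi_{h,\gamma_h}\), since \(V^\pi_{h,\gamma_h}(s')=\sum_{u^+}\pi(u^+| s')Q^\pi_{h,\gamma_h}(s',u^+)\) by definition. Uniqueness then shows it is the fixed point. Similarly, \(Q^*_{h,\gamma_h}\in[0,1]\), and the discounted safety Bellman equation, with \(V^*_{h,\gamma_h}=\max_u Q^*_{h,\gamma_h}\), states exactly that \(Q^*_{h,\gamma_h}=\mathcal{T}_{h,\gamma_h}Q^*_{h,\gamma_h}\).

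\textbf{Main obstacle.} The only delicate point is this last identification for \(Q^*\). It relies on the discounted Bellman theorem, whose proof was omitted. If one wants to avoid depending on it, I would instead show directly that the fixed point \(Q^\star\) of \(\mathcal{T}_{h,\gamma_h}\) equals \(\sup_\pi Q^\pi_{h,\gamma_h}\), in two steps:
\begin{itemize}
\item[(i)] Since \(\mathcal{T}^\pi_{h,\gamma_h}Q\le\mathcal{T}_{h,\gamma_h}Q\), monotonicity and iteration give \(Q^\pi_{h,\gamma_h}=\lim_n(\mathcal{T}^\pi_{h,\gamma_h})^nQ^\star\le Q^\star\).
\item[(ii)] The greedy deterministic policy \(\pi^\star(s)\in\arg\max_u Q^\star(s,u)\) satisfies \(\mathcal{T}^{\pi^\star}_{h,\gamma_h}Q^\star=Q^\star\). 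Uniqueness of the fixed point of \(\mathcal{T}^{\pi^\star}_{h,\gamma_h}\) then gives \(Q^{\pi^\star}_{h,\gamma_h}=Q^\star\).
\end{itemize}
Together these show \(Q^\star=Q^*_{h,\gamma_h}\). The maximum in step (ii) exists because \(\mathcal U\) is finite.
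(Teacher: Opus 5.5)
Your proposal is correct and follows the paper's proof: nonnegative coefficients give monotonicity, the $(1-\gamma_h)g(s)$ term cancels, the max-difference inequality gives the contraction, and Banach's theorem gives uniqueness, with the fixed points identified through the discounted self-consistency and Bellman identities stated just before the theorem. Your optional greedy-policy argument in (i)--(ii) is a worthwhile addition: it proves $Q^\star=\sup_\pi Q^\pi_{h,\gamma_h}$ directly, which removes the dependence on the discounted Bellman theorem, whose proof the paper omits (the lower bound in its undiscounted analogue is only sketched via the principle of optimality).
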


\begin{proof}
Monotonicity is straightforward. We prove contraction below. Fix \((s,u)\in\mathcal{S}\times\mathcal{U}\). For the evaluation operator,
\begin{equation}
\nonumber
\begin{aligned}
&\left|
(\mathcal{T}_{h,\gamma_h}^{\pi}Q)(s,u)
-
(\mathcal{T}_{h,\gamma_h}^{\pi}\widetilde Q)(s,u)
\right|
\\
&=
\gamma_h\,g(s)\left|
\sum_{o^+\in\mathcal{O}}
O(o^+| s,u)
\sum_{u^+\in\mathcal{U}}
\pi(u^+| F(s,u,o^+))
\right.
\\&\qquad\qquad\left.
\cdot
\left(
Q(F(s,u,o^+),u^+)-\widetilde Q(F(s,u,o^+),u^+)
\right)
\right|
\\
&\le
\gamma_h\,g(s)\\&
\sum_{o^+\in\mathcal{O}}
O(o^+| s,u)
\sum_{u^+\in\mathcal{U}}
\pi(u^+| F(s,u,o^+))
\left\|Q-\widetilde Q\right\|_{\infty}
\\
&\le
\gamma_h
\left\|Q-\widetilde Q\right\|_{\infty}.
\end{aligned}
\end{equation}
Taking the supremum over \((s,u)\) proves \eqref{eq:contraction-eval}.

For the optimality operator, we use the elementary inequality
\begin{equation}
\nonumber
\left|
\max_{u^+} a_{u^+}
-
\max_{u^+} \tilde a_{u^+}
\right|
\le
\max_{u^+} |a_{u^+}-\tilde a_{u^+}|.
\end{equation}
Applying this to \eqref{eq:discounted-optimality-operator}, we obtain
\begin{equation}
\nonumber
\begin{aligned}
&\left|
(\mathcal{T}_{h,\gamma_h}Q)(s,u)
-
(\mathcal{T}_{h,\gamma_h}\widetilde Q)(s,u)
\right|
\\
&\le
\gamma_h\,g(s)
\sum_{o^+\in\mathcal{O}}
O(o^+| s,u)\\&\quad
\max_{u^+\in\mathcal{U}}
\left|
Q(F(s,u,o^+),u^+)-\widetilde Q(F(s,u,o^+),u^+)
\right|
\\
&\le
\gamma_h
\left\|Q-\widetilde Q\right\|_{\infty}.
\end{aligned}
\end{equation}
Taking the supremum over \((s,u)\) proves \eqref{eq:contraction-opt}. Banach's fixed-point theorem then gives uniqueness of the fixed points.
\end{proof}

The discounted quantity \(Q_{h,\gamma_h}^{\pi}\) is introduced for algorithmic and analytical convenience, but it remains faithful to the original perpetual-safety objective. The next result shows that, for every fixed policy, the discounted surrogate converges to the original value as \(\gamma_h\rightarrow 1\).

\begin{theorem}[recovery of the undiscounted safety value]
For every stochastic policy \(\pi\) and every \((s,u)\in\mathcal{S}\times\mathcal{U}\),
\begin{equation}
\lim_{\gamma_h\rightarrow 1}
Q_{h,\gamma_h}^{\pi}(s,u)
=
Q_h^{\pi}(s,u).
\label{eq:abelian-limit}
\end{equation}
Consequently, for every \(s\in\mathcal{S}\),
\begin{equation}
\nonumber
\lim_{\gamma_h\rightarrow 1}
V_{h,\gamma_h}^{\pi}(s)
=
V_h^{\pi}(s).
\end{equation}
\end{theorem}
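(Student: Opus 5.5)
The plan is an Abelian-limit argument built on the geometric-horizon interpretation. By the proof of Proposition~\ref{prop:geometric-horizon}, for every stochastic policy \(\pi\) and every \((s,u)\) we have
\begin{equation}
\nonumber
Q_{h,\gamma_h}^{\pi}(s,u)=(1-\gamma_h)\sum_{t=0}^{\infty}\gamma_h^t\,a_t,\qquad a_t:=\mathbb{P}_{s,u}^{\pi}(\tau>t).
\end{equation}
The law \(\mathbb{P}_{s,u}^{\pi}\) does not depend on \(\gamma_h\), since \(\gamma_h\) enters only as a weighting. Hence \((a_t)\) is a fixed sequence in \([0,1]\). It is nonincreasing because \(\{\tau>t+1\}\subseteq\{\tau>t\}\). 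By continuity of probability from above, \(a_t\downarrow a_\infty=\mathbb{P}_{s,u}^{\pi}(\tau=\infty)=Q_h^{\pi}(s,u)\), using Definition~\ref{def:safety_value}.

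The key step is to show that the Abel means of a convergent sequence converge to the same limit. Using \((1-\gamma_h)\sum_t\gamma_h^t=1\), write
\begin{equation}
\nonumber
\bigl|Q_{h,\gamma_h}^{\pi}(s,u)-a_\infty\bigr|\le(1-\gamma_h)\sum_{t=0}^{\infty}\gamma_h^t\,|a_t-a_\infty|.
\end{equation}
Given \(\varepsilon>0\), choose \(N\) with \(|a_t-a_\infty|\le\varepsilon\) for all \(t\ge N\). Split the sum at \(N\). The terms with \(t<N\) are bounded by \((1-\gamma_h)N\), since each \(|a_t-a_\infty|\le 1\) and \(\gamma_h^t\le 1\). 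The tail is bounded by \(\varepsilon\). Letting \(\gamma_h\to 1\) and then \(\varepsilon\to 0\) gives \eqref{eq:abelian-limit}.

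An alternative route is to note that \(\mathbf{1}\{\tau>T\}\to\mathbf{1}\{\tau=\infty\}\) in probability as \(T\to\infty\) in law, and apply dominated convergence. However, the splitting argument above is more elementary, so I will use it.

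For the state-value statement, note that \(V_{h,\gamma_h}^{\pi}(s)=\sum_u\pi(u|s)Q_{h,\gamma_h}^{\pi}(s,u)\) is a finite convex combination, because \(\mathcal U\) is finite. The weights \(\pi(u|s)\) do not depend on \(\gamma_h\). The limit therefore passes through the sum, and we obtain \(V_h^{\pi}(s)\).

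The only subtle point is justifying that \(\gamma_h\) does not affect the trajectory law. This holds because \(\pi\) is fixed and \(\gamma_h\) appears only in the weights of \eqref{eq:discounted-safety-q}. Everything else is routine.
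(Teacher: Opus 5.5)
Your proof is correct and follows essentially the same argument as the paper. Both write $Q_{h,\gamma_h}^{\pi}(s,u)$ as the Abel mean $(1-\gamma_h)\sum_{t}\gamma_h^t a_t$ of the nonincreasing survival probabilities $a_t=\mathbb{P}_{s,u}^{\pi}(\tau>t)\downarrow Q_h^{\pi}(s,u)$, split the sum at $N$ to get $\limsup\le\varepsilon$, and pass the limit through the finite combination $\sum_u\pi(u|s)(\cdot)$ for $V$; the only cosmetic difference is that you bound the head terms by $(1-\gamma_h)N$ rather than $(1-\gamma_h)\sum_{t<N}|a_t-a_\infty|$.
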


\begin{proof}
Fix a stochastic policy \(\pi\) and an initial pair \((s,u)\). Define
\begin{equation}
\nonumber
a_t
=
\mathbb{P}_{s,u}^{\pi}(\tau>t)
=
\mathbb{E}_{s,u}^{\pi}
\left[
\prod_{k=0}^{t} g(s_k)
\right],
\qquad t\ge 0.
\end{equation}
Since the events \(\{\tau>t\}\) are decreasing in \(t\), the sequence \((a_t)_{t\ge 0}\) is nonincreasing and bounded in \([0,1]\). Moreover, by continuity of probability for decreasing events,
\begin{equation}
\lim_{t\to\infty} a_t
=
\mathbb{P}_{s,u}^{\pi}\!\left(\bigcap_{t=0}^{\infty}\{\tau>t\}\right)
=
\mathbb{P}_{s,u}^{\pi}(\tau=\infty)
=
Q_h^\pi(s,u).
\label{eq:at-limit}
\end{equation}
On the other hand, by \eqref{eq:discounted-safety-q},
\begin{equation}
\nonumber
Q_{h,\gamma_h}^{\pi}(s,u)
=
(1-\gamma_h)\sum_{t=0}^{\infty}\gamma_h^t a_t.
\end{equation}

Let \(a_\infty:=Q_h^\pi(s,u)\). Given \(\varepsilon>0\), choose \(N\) such that \(|a_t-a_\infty|\le \varepsilon\) for all \(t\ge N\), which is possible by \eqref{eq:at-limit}. Then
\begin{equation}
\nonumber
\begin{aligned}
&\left|
Q_{h,\gamma_h}^{\pi}(s,u)-a_\infty
\right|
\\
&=
\left|
(1-\gamma_h)\sum_{t=0}^{\infty}\gamma_h^t(a_t-a_\infty)
\right|
\\
&\le
(1-\gamma_h)\sum_{t=0}^{N-1}\gamma_h^t|a_t-a_\infty|
+
(1-\gamma_h)\sum_{t=N}^{\infty}\gamma_h^t|a_t-a_\infty|
\\
&\le
(1-\gamma_h)\sum_{t=0}^{N-1}|a_t-a_\infty|
+
\varepsilon(1-\gamma_h)\sum_{t=N}^{\infty}\gamma_h^t
\\
&\le
(1-\gamma_h)\sum_{t=0}^{N-1}|a_t-a_\infty|
+
\varepsilon.
\end{aligned}
\end{equation}
The first term on the right tends to zero as \(\gamma_h\to 1\), since it is a finite constant multiplied by \(1-\gamma_h\). Therefore
\begin{equation}
\nonumber
\limsup_{\gamma_h\to 1}
\left|
Q_{h,\gamma_h}^{\pi}(s,u)-Q_h^\pi(s,u)
\right|
\le
\varepsilon.
\end{equation}
Since \(\varepsilon>0\) is arbitrary, \eqref{eq:abelian-limit} follows.
The remaining proof for state-value function is straightforward.
\end{proof}

The results in this section provide a theoretical foundation for designing meta-RL algorithms that satisfy safety constraints. The discounted safety Bellman equation suggests a policy-iteration-style approach to computing the discounted safety value function. In practice, this naturally leads to an actor-critic algorithm, in which the actor and critic serve as function approximators for the policy and safety value function, respectively.

\section{Algorithm Design}

In this section, we present a practical safe meta-RL algorithm that seeks to maximize cumulative reward while satisfying given safety constraints. As noted earlier, exact belief updates are generally intractable. We therefore introduce an encoder $q_\phi\left(\xi | c_t\right)$ that provides a finite-dimensional parameterization serving as a proxy for the belief $b_t$. Building on this representation, we employ two actor-critic pairs. The first, consisting of a safety actor and a safety critic, is derived from the theoretical framework developed in the previous section and is dedicated to safety preservation. The second is a standard actor-critic pair for task performance, namely reward maximization. In this module, the safety critic is further used as a safety filter and in constrained policy optimization for training the actor.

The encoder follows the probabilistic latent-variable design introduced in \cite{rakelly2019efficient}. At time $t$, the context is given by $c_t = (x_0, h_0, u_0, r_0, \dots, x_{t-1}, h_{t-1}, u_{t-1}, r_{t-1}, x_t, h_t)$.
Given this context, the encoder produces an approximate Gaussian posterior over the latent task representation $\xi\sim q_\phi(\xi | c_t) = \mathcal{N}(\mu_t, \Sigma_t)$, where $\phi$ denotes the parameters of the encoder network. Specifically, taking $c_t$ as input, the network outputs the mean $\mu_t$ and covariance $\Sigma_t$ of the latent representation $\xi$.

For both the safety critic and the performance critic, we adopt the double Q-network design \cite{haarnoja2018soft} and use two networks for each. The safety critic networks are denoted by $Q_h(x, u, \xi; \psi_1)$ and $Q_h(x, u, \xi; \psi_2)$, while the performance critic networks are denoted by $Q(x, u, \xi; \omega_1)$ and $Q(x, u, \xi; \omega_2)$.
For a cleaner presentation, we use the shorthand notations $Q_h(x, u, \xi; \psi) = \min_{i \in \{1,2\}} Q_h(x, u, \xi; \psi_i)$ and $Q(x, u, \xi; \omega) = \min_{i \in \{1,2\}} Q(x, u, \xi; \omega_i)$. The same convention applies to the corresponding target networks, whose parameters are denoted by $\hat{\psi}_1$, $\hat{\psi}_2$, $\hat{\omega}_1$, and $\hat{\omega}_2$.
Since the safety value lies in $[0,1]$, we apply a sigmoid activation at the output layer of the safety critic to enforce this range.

We denote the task actor, which is responsible for maximizing task performance, by $\pi(x,\xi;\theta)$, and the safety actor, which is responsible for maximizing the safety value, by $\pi_h(x,\xi;\varphi)$. The task actor is stochastic in order to encourage exploration and improve performance, whereas the safety actor is deterministic to support safety preservation. Accordingly, we write $u \sim \pi(x,\xi;\theta)$ and $u = \pi_h(x,\xi;\varphi)$.

Following the soft actor-critic algorithm \cite{haarnoja2018soft}, the performance critic loss is defined as
\begin{equation}
\nonumber
\mathcal{L}_{Q}(\omega_i)
=
\mathbb{E}_{\substack{(x,u,r,x') \sim \mathcal{B} \\
\xi \sim q_\phi(\xi | c_t)}}
\left[
\left(Q(x,u,\xi;\omega_i) - \hat{Q}\right)^2
\right],
\end{equation}
where $\mathcal{B}$ denotes the replay buffer and the target value is given by
\begin{equation}
\nonumber
\hat{Q} = r + \gamma \left( Q(x',u',\xi;\hat{\omega}) - \alpha \log \pi(u' | x', \xi; \theta) \right),
\end{equation}
with $u' \sim \pi(x',\xi;\theta)$. $\alpha$ denotes the regularization parameter.
Based on Definition \ref{def:safety_operators}, the safety critic loss is defined as
\begin{equation}
\nonumber
\mathcal{L}_{Q_h}(\psi_i)
=
\mathbb{E}_{\substack{(x,u,g,x') \sim \mathcal{B} \\
\xi \sim q_\phi(\xi | c_t)}}
\left[
\left(Q_h(x,u,\xi;\psi_i) - \hat{Q}_h\right)^2
\right],
\end{equation}
where the target safety value is given by
\begin{equation}
\nonumber
\hat{Q}_h = (1-\gamma_h) g + \gamma_h g Q_h(x',u',\xi;\hat{\psi}),
\end{equation}
with $u' = \pi_h(x',\xi;\varphi)$.

The safety actor $\pi_h(x,\xi;\varphi)$ is designed to maximize the safety value. Its loss function is therefore defined as
\begin{equation}
\nonumber
\mathcal{L}_{\pi_h}(\varphi)
=
-\mathbb{E}_{\substack{x \sim \mathcal{B},
\xi \sim q_\phi(\xi | c_t)}}
\left[
Q_h(x,\pi_h(x,\xi;\varphi),\xi;\psi)
\right].
\end{equation}

The task actor aims to maximize reward while satisfying the safety constraint. To this end, we perform constrained policy optimization with the constraint $Q_h(x,u,\xi;\psi) \geq 1-\delta$, where $u \sim \pi(x,\xi;\theta)$ and $\delta>0$ is a small positive constant included for numerical stability. We adopt a primal-dual approach, which alternates between updating the task actor parameters and the Lagrange multiplier $\lambda$. The loss function for the task actor is given by
\begin{equation}
\nonumber
\begin{aligned}
\mathcal{L}_{\pi}(\theta)
=
-&\mathbb{E}_{\substack{x \sim \mathcal{B},
\xi \sim q_\phi(\xi | c_t)}}
\left[
Q(x,u,\xi;\omega)
\right.\\&
\left.+\lambda Q_h(x,u,\xi;\psi)-\alpha \log \pi(u | x,\xi;\theta)
\right],
\end{aligned}
\end{equation}
where $u \sim \pi(x,\xi;\theta)$.
The loss function for the Lagrange multiplier is defined as
\begin{equation}
\nonumber
\mathcal{L}_{\lambda}
=
\mathbb{E}_{\substack{x \sim \mathcal{B},
\xi \sim q_\phi(\xi | c_t)}}
\left[
\lambda \left(Q_h(x,u,\xi;\psi)-(1-\delta)\right)
\right],
\end{equation}
where $u \sim \pi(x,\xi;\theta)$, and the multiplier is constrained to satisfy $\lambda \geq 0$.

In addition to its role in constrained policy optimization, the safety critic is also used as a safety filter. By Definitions \ref{def:safety_value} and \ref{def:discounted_safety_value}, the safety values provide an assessment of how safe it is to take action $u$ under the current information state $(x,\xi)$. Accordingly, if the nominal action proposed by the task actor is deemed unsafe by the safety critic, we override it with the action produced by the safety actor.

Following \cite{rakelly2019efficient}, the encoder is trained jointly with both the safety critic and the performance critic, and is regularized toward a standard Gaussian prior $p(\xi)=\mathcal{N}(0,I)$ to prevent collapse. Its loss function is defined as
\begin{equation}
\nonumber
\mathcal{L}_{q}(\phi)
=
\mathcal{L}_{Q_h}(\psi,\phi)
+
\mathcal{L}_{Q}(\omega,\phi)
+
\beta_{\mathrm{KL}} D_{\mathrm{KL}}\!\left(q_\phi(\xi | c_t)\,\|\,p(\xi)\right),
\end{equation}
where \(\beta_{\mathrm{KL}}\) is a positive weighting coefficient.
Thus, the encoder learns a latent representation that is informative for both reward prediction and safety certification.

We call our algorithm \emph{Information Safety Dual Actor-Critic} (\algname).
The overall method consists of two phases, meta-training and meta-testing, summarized in Algorithms \ref{alg:meta_training} and \ref{alg:meta_testing}, respectively. During meta-training, the agent is exposed to a distribution of tasks and jointly learns the encoder, the safety and performance critics, the safety actor, and the task actor. In this way, the latent variable $\xi$ captures task uncertainty, while the learned policies achieve high return under safety constraints across tasks drawn from the training distribution. Note that, following the design in \cite{rakelly2019efficient}, we maintain two buffers during meta-training: a standard RL replay buffer $\mathcal{B}$ and a smaller context buffer $\mathcal{B}_c$. The latter is introduced because encoder learning benefits from data that is closer to on-policy.
During meta-testing, all learned parameters are kept fixed and the agent is deployed on a previously unseen task. Adaptation then proceeds through the accumulation of context and the corresponding posterior update. Once sufficient context has been collected, the adaptation process is complete, and the agent can resample the latent variable $\xi$ for formal deployment or evaluation.

\begin{algorithm}
\caption{\algname: Meta-Training.}
\label{alg:meta_training}
\KwIn{batch of training tasks $\{z_i\}_{i=1}^{N} \sim p(z)$; network parameters $\theta$, $\varphi$, $\omega_1$, $\omega_2$, $\psi_1$, $\psi_2$, $\phi$, target network parameters $\hat{\omega}_1 \leftarrow \omega_1$, $\hat{\omega}_2 \leftarrow \omega_2$, $\hat{\psi}_1 \leftarrow \psi_1$, $\hat{\psi}_2 \leftarrow \psi_2$, target smoothing coefficient $\tau$, regularization coefficient $\alpha$, learning rate $\eta$, multiplier $\lambda$, safety threshold $\delta$.}

Initialize replay buffers $\mathcal{B}^i$ and $\mathcal{B}_c^i$ for training tasks.

\For{each outer step}{
    \For{each task $z_i$}{
        Initialize context $c^i$;
        
        \For{each system step}{
            Sample $\xi \sim q_\phi(\xi | c^i)$;
            
            Sample nominal action $u \sim \pi(x,\xi;\theta)$;
            
            \If{$Q_h(x,u,\xi;\psi) < 1-\delta$}{
                $u \leftarrow \pi_h(x,\xi;\varphi)$;
            }
            
            Execute $u$. Update $\mathcal{B}^i$ and $\mathcal{B}_c^i$;
            
            Update context $c^i$.
        }
    }

    \For{each inner step}{
        \For {each task $z_i$}{
            Sample context batch $c^i \sim \mathcal{B}_c^i$ and data batch $b^i \sim \mathcal{B}^i$;

            Sample $\xi \sim q_\phi(\xi | c^i)$;

            Compute per-task losses $\mathcal{L}_{Q}^{i}(\omega)$, $\mathcal{L}_{Q_h}^{i}(\psi)$, $\mathcal{L}_{\pi}^{i}(\theta)$, $\mathcal{L}_{\pi_h}^{i}(\varphi)$, $\mathcal{L}^{i}_{\lambda}$, and $\mathcal{L}_{q}^{i}(\phi)$.
        }

        \For {each gradient step}{

        $\omega_j \leftarrow \omega_j -\eta \nabla_{\omega_j} \sum_i \mathcal{L}_{Q}^{i}$, for $j\in \left\{1,2\right\}$;

        $\psi_j \leftarrow \psi_j - \eta \nabla_{\psi_j} \sum_i \mathcal{L}_{Q_h}^{i}$, for $j\in \left\{1,2\right\}$;

        $\theta \leftarrow \theta - \eta \nabla_\theta \sum_i \mathcal{L}_{\pi}^{i}$;

        $\varphi \leftarrow \varphi - \eta \nabla_\varphi \sum_i \mathcal{L}_{\pi_h}^{i}$;
        
        $\lambda \leftarrow \max \left\{\lambda - \eta \nabla_\lambda \sum_i \mathcal{L}_{\lambda}^{i},0\right\}$;

        $\phi \leftarrow \phi - \eta \nabla_\phi \sum_i \mathcal{L}_{q}^{i}$;

        $\hat{\omega}_j \leftarrow \tau \omega_j + (1-\tau)\hat{\omega}_j$, for $j\in \left\{1,2\right\}$;
        
        $\hat{\psi}_j \leftarrow \tau \psi_j + (1-\tau)\hat{\psi}_j$, for $j\in \left\{1,2\right\}$.
        }
    }
}
\end{algorithm}

\begin{algorithm}
\caption{\algname: Meta-Testing.}
\label{alg:meta_testing}
\KwIn{Test task $z_I\sim p(z)$, safety threshold $\delta$.}

Initialize context $c^I$.

\For{each system step}{
        Sample $\xi \sim q_\phi(\xi | c^I)$;
            
        Sample nominal action $u \sim \pi(x,\xi;\theta)$;
            
        \If{$Q_h(x,u,\xi;\psi) < 1-\delta$}{
            $u \leftarrow \pi_h(x,\xi;\varphi)$;
        }
            
        Execute $u$. Gather data;
            
        Accumulate context $c^I$.
}
\end{algorithm}

\section{Experiments}

In this section, we evaluate the proposed algorithm on two continuous-control safe meta-RL benchmarks based on the HalfCheetah robot in the MuJoCo simulator~\cite{todorov2012mujoco}, adapted from standard meta-RL benchmarks~\cite{rakelly2019efficient, zintgraf2021varibad}.

\textbf{HalfCheetah-Fwd-Back.}
This environment consists of two tasks: moving forward ($d=+1$) and backward ($d=-1$). The task-specific reward is $r(x,u) = d \cdot v - 0.05 \|u\|_2^2$,
where $v$ denotes the robot's velocity, $d \in \{-1,+1\}$ specifies the desired direction, and the second term penalizes control effort. The safety constraint requires the velocity magnitude to remain below a prescribed threshold: $|v| \leq v_{\max}$ with $v_{\max}=6.0$. Since the reward encourages high-speed locomotion while the constraint limits velocity, there is an inherent conflict between performance and safety in both tasks.

\textbf{HalfCheetah-Vel.}
In this environment, the robot is required to track a task-specific target velocity $v^\star$ sampled uniformly from $[0,3]$. The reward penalizes deviation from the target: $r(x,u) = -|v - v^\star| - 0.05 \|u\|_2^2$.
The same form of safety constraint is imposed, namely $|v| \leq v_{\max}$, with the threshold $v_{\max}=1.5$. The environment contains 100 training tasks and 30 test tasks, each associated with a distinct target velocity. This setting is particularly challenging since the relationship between the reward and the safety constraint depends on the task: when $v^\star \leq v_{\max}$, the two are naturally aligned, whereas when $v^\star > v_{\max}$, they are in direct conflict. The agent must therefore learn to distinguish among tasks and adapt its behavior accordingly.

\begin{figure}[htbp]
    \centering
    \includegraphics[width=3.5in]{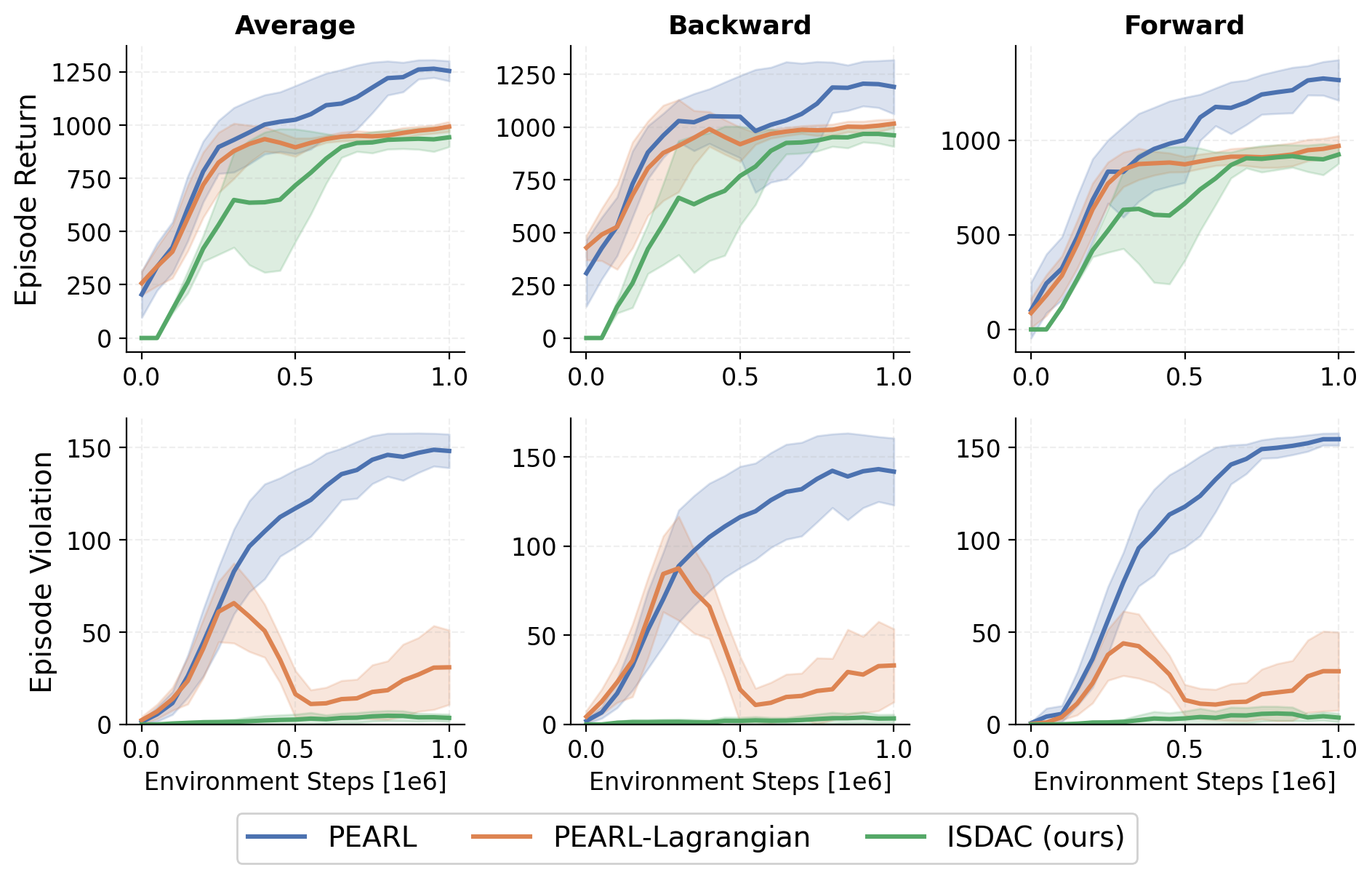}
    \caption{Meta-testing performance on HalfCheetah-Fwd-Back as meta-training progresses. The solid lines correspond to the mean and the shaded regions correspond to $\pm 1$ standard deviation over three seeds.}
    \label{fig:dir}
\end{figure}

\begin{figure*}[htbp]
    \centering
    \includegraphics[width=7.0in]{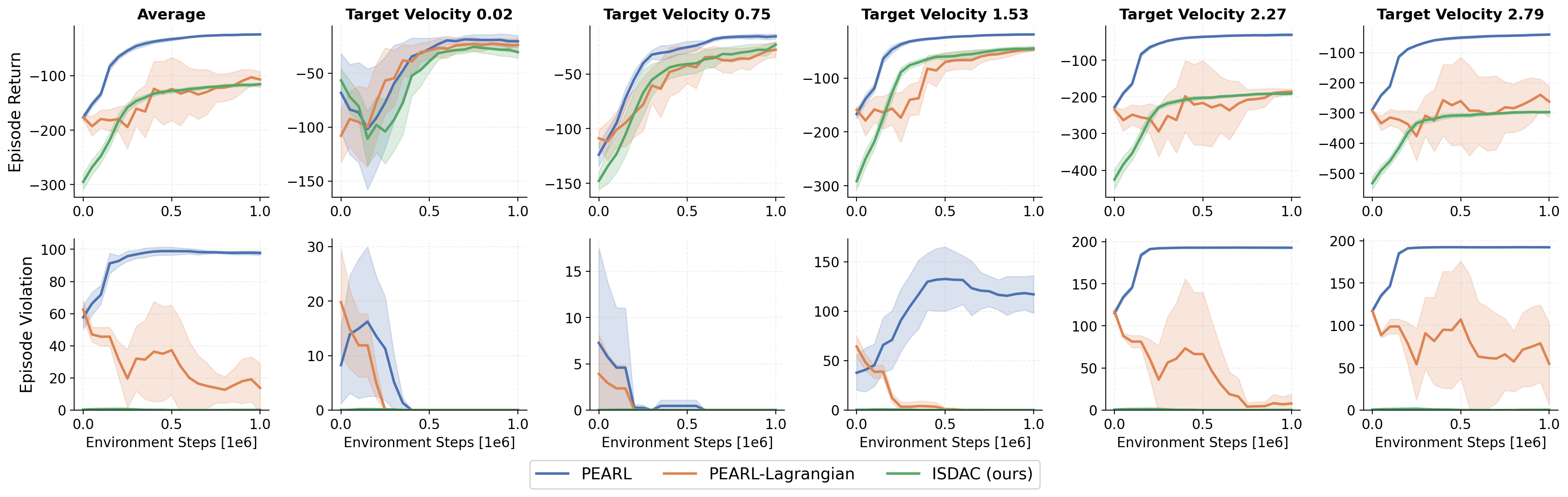}
    \caption{Meta-testing performance on HalfCheetah-Vel as meta-training progresses. The solid lines correspond to the mean and the shaded regions correspond to $\pm 1$ standard deviation over three seeds.}
    \label{fig:vel}
\end{figure*}

\textbf{Baselines.}
We compare our method against two baseline algorithms. The first is PEARL \cite{rakelly2019efficient}, a widely adopted standard meta-RL algorithm. The second, which we call PEARL-Lagrangian, combines PEARL with SAC-Lagrangian \cite{ha2020learning}, a widely used safe RL algorithm under the CMDP framework. This approach enforces trajectory-cost constraints in expectation through a learned cost value function. All three methods use multilayer perceptrons as function approximators and share the same hyperparameter settings. We choose not to include the optimization-based safe meta-RL methods \cite{khattar2023a, cho2024constrained, xu2025efficient}, as we were unable to obtain satisfactory performance within our interaction budget, which may reflect limitations in their sample efficiency.

During meta-training, we report two meta-testing metrics: episode return (cumulative reward) and episode violation (the number of unsafe timesteps). Fig.~\ref{fig:dir} shows the results on HalfCheetah-Fwd-Back, with the left column reporting averages across both tasks and the remaining columns showing the backward and forward tasks separately. Fig.~\ref{fig:vel} shows the results on HalfCheetah-Vel, with the left column reporting the average over all 30 test tasks and the remaining columns showing five representative tasks spanning target velocities in $[0,3]$, from the aligned regime (low $v^\star$) to the conflicting regime (high $v^\star$).

PEARL achieves high return but incurs severe constraint violations, as it lacks any explicit safety mechanism. PEARL-Lagrangian reduces the number of violations, but fails to drive them close to zero, suggesting that a naive application of CMDP-based techniques to meta-RL is insufficient. In contrast, ISDAC maintains near-zero violations while achieving competitive return in both environments.
The results on HalfCheetah-Vel further highlight the adaptivity of ISDAC. For low-velocity tasks, where the target velocity lies within the safety threshold, ISDAC performs comparably to PEARL. As the target velocity exceeds $v_{\max}$, ISDAC consistently maintains a low violation rate by learning to trade off return against safety. These results demonstrate that ISDAC can infer the task-dependent tension between reward maximization and safety from limited experience, and adapt its policy accordingly.

\section{Conclusion}

In this paper, we presented a safe meta-RL framework for learning a meta-policy that can rapidly adapt to unseen tasks while satisfying safety requirements. We introduced the safety value function and established its theoretical properties. Based on these results, we further developed a practical safe meta-RL algorithm for complex high-dimensional tasks, in which the safety value function is learned and used to guide constrained policy optimization.

\bibliographystyle{IEEEtran}
\bibliography{reference}

\end{document}